\newtheorem{theorem}{Theorem}[section]
\newtheorem{definition}{Definition}[section]
\newtheorem{corollary}{Corollary}[section]
\newif\ifarxiv
\title{When differential privacy meets NLP: The devil is in the detail}
\author{Ivan Habernal \\
	Trustworthy Human Language Technologies  \\
	Department of Computer Science \\
	Technical University of Darmstadt \\
	\texttt{ivan.habernal@tu-darmstadt.de} \\
	\texttt{www.trusthlt.org}  \\}
\date{}
\begin{document}
	
	\ifarxiv
	\onecolumn
	\noindent \textbf{When differential privacy meets NLP: The devil is in the detail}
	
	\medskip
	\noindent Ivan Habernal
	
	\bigskip
	This is a \textbf{camera-ready version} of the article accepted for publication at the \emph{2021 Conference on Empirical Methods in Natural Language Processing (EMNLP 2021)}. The final official version will be published on the ACL Anthology website later in 2021: \url{https://aclanthology.org/}
	
	\medskip
	Please cite this pre-print version as follows.
	\medskip
	
\begin{verbatim}
@InProceedings{Habernal.2021.EMNLP,
    title = {{When differential privacy meets NLP:
              The devil is in the detail}},
    author = {Habernal, Ivan},
    publisher = {Association for Computational Linguistics},
    booktitle = {Proceedings of the 2021 Conference on Empirical
                 Methods in Natural Language Processing},
    pages = {(to appear)},
    year = {2021},
    address = {Punta Cana, Dominican Republic}
}
\end{verbatim}
	\twocolumn
	\fi

\maketitle
\begin{abstract}
Differential privacy provides a formal approach to privacy of individuals.
Applications of differential privacy in various scenarios, such as protecting users' original utterances, must satisfy certain mathematical properties.
Our contribution is a formal analysis of ADePT, a differentially private auto-encoder for text rewriting \citep{Krishna.et.al.2021.EACL}. ADePT achieves promising results on downstream tasks while providing tight privacy guarantees.
Our proof reveals that ADePT is not differentially private, thus rendering the experimental results unsubstantiated. We also quantify the impact of the error in its private mechanism, showing that the true sensitivity is higher by at least factor 6 in an optimistic case of a very small encoder's dimension and that the amount of utterances that are not privatized could easily reach 100\% of the entire dataset. Our intention is neither to criticize the authors, nor the peer-reviewing process, but rather point out that if differential privacy applications in NLP rely on formal guarantees, these should be outlined in full and put under detailed scrutiny.

\end{abstract}

\section{Introduction}

The need for NLP systems to protect individuals' privacy has led to the adoption of differential privacy (DP). DP methods formally guarantee that the output of the algorithm will be `roughly the same' regardless of whether or not any single individual is present in the central dataset; this is achieved by employing randomized algorithms \citep{Dwork.Roth.2013}. Local DP, a variant of DP, mitigates the need for a central dataset and applies randomization on each individual's datapoint. Local DP thus guarantees that its output for an individual A will be `almost indistinguishable' from the output of any other individuals B or C.\footnote{See the \emph{randomized response} for an easy explanation of local DP for a single bit \citep{Warner.1965}.}

This level of privacy protection makes local DP an ideal framework for NLP applications that operate on sensitive user input which should not be collected and processed globally by an untrusted party, e.g., users' verbatim utterances. When the utterances are `privatized' by local DP, any future post-processing or adversarial attack cannot reveal more than allowed by the particular local DP algorithm's properties (namely the $\varepsilon$ parameter; see later Sec.~\ref{sec:theoretical.background}).

ADePT, a local DP algorithm recently published at EACL by \citet{Krishna.et.al.2021.EACL} from Amazon Alexa, proposed a differentially private auto-encoder for text rewriting.
In summary, ADePT takes an input textual utterance and re-writes it in a way such that the output satisfies local DP guarantees. Unfortunately, a thorough formal ana\-ly\-sis reveals that ADePT is in fact not differentially private and the privatized data do not protect privacy of individuals as formally promised.

In this short paper, we shed light on ADePT's main argument, the privacy mechanism.
We briefly introduce key concepts from differential privacy (DP) and present a detailed proof of the Laplace mechanism (Sec.~\ref{sec:theoretical.background}).
Section~\ref{sec:adept} introduces ADePT's \citep{Krishna.et.al.2021.EACL} architecture and its main privacy argument. We formally prove that the proposed ADePT's mechanism is in fact not differentially private (Sec.~\ref{sec:adept.is.not.dp}) and determine the actual sensitivity of its private mechanism (Sec.~\ref{sec:sensitivity.of.adept}). We sketch to which extent ADePT breaches privacy as opposed to the formal DP guarantees (Sec.~\ref{sec:non-protected}) and discuss a potential adversary attack (Appendix~\ref{sec:potential.attacks}).

\section{Theoretical background}
\label{sec:theoretical.background}

From a high-level perspective, DP works with the notion of \emph{individuals} whose information is contained in a \emph{database} (\emph{dataset}). Each individual's \emph{datapoint} (or \emph{record}), which could be a single bit, a number, a vector, a structured record, a text document, or any arbitrary object, is considered private and cannot be revealed. Moreover, even whether or not any particular individual A is in the database is considered private.

\begin{definition}
	\label{def:neighboring.datasets}
	Let $\mathcal{X}$ be a `universe' of all records and $x,y \in \mathcal{X}$ be two datasets from this universe. We say that $x$ and $y$ are neighboring datasets if they differ in one record.
\end{definition}

For example, let dataset $x$ consist of $|x|$ documents where each document is associated with an individual whose privacy we want to preserve. Let $y$ differ from $x$ by one document, so either $|y| = |x| \pm 1$, or $|y| = |x|$ with $i$-th document replaced. Then by definition \ref{def:neighboring.datasets}, $x$ and $y$ are neighboring datasets.

\paragraph{Global DP and queries}

In a typical setup, the database is not public but held by a \emph{trusted curator}. Only the curator can fully access all datapoints and answer any \emph{query} we might have, for example how many individuals are in the database, whether or not B is in there, what is the most common disease (if the database is medical), what is the average length of the documents (if the database contains texts), and so on. The types of queries are task-specific, and we can see them simply as functions with arbitrary domain $\mathcal{X}$ and co-domain $\mathcal{Z}$. In this paper, we focus on a simple query type, the \emph{numerical query}, that is a function with co-domain in $\mathbb{R}^n$.

For example, consider a dataset $x \in \mathcal{X}$ containing textual documents and a numerical query $f: \mathcal{X} \to \mathbb{R}$ that returns an average document length. Let's assume that the length of each document is private, sensitive information. Let the dataset $x$ contain a particular individual A whose privacy we want to breach. Say we also have some leaked background information, in particular a neighboring dataset $y \in \mathcal{X}$ that contains all datapoints from $x$ except for A. Now, if the trusted curator returned the true value of $f(x)$, we could easily compute A's document length, as we know $f(y)$, and thus we could breach A's privacy. To protect A's privacy, we will employ randomization.

\begin{definition}
	Randomized algorithm $\mathcal{M} : \mathcal{X} \to \mathcal{Z}$ takes an input value $x \in \mathcal{X}$ and outputs a value $z \in \mathcal{Z}$ nondeterministically, e.g., by drawing from a certain probability distribution.
\end{definition}

Typically, randomized algorithms are parameterized by a density (for $z \in \mathbb{R}^n$) or a discrete distribution (for categorical or binary $z$). The randomized algorithm `perturbs' the input by drawing from that distribution. We suggest to consult \citep{Igamberdiev.Habernal.2021} for yet another NLP introduction to differential privacy.

\begin{definition}
	\label{def:dp}
	Randomized algorithm $\mathcal{M}$ satisfies ($\varepsilon$,0)-differential privacy if and only if for any neighboring datasets $x, y \in \mathcal{X}$ from the domain of $\mathcal{M}$, and for any possible output $z \in \mathcal{Z}$ from the range of $\mathcal{M}$, it holds
	\begin{equation}
	\label{eq:dp-def}
	\mathrm{Pr}[\mathcal{M}(x) = z] \leq \exp(\varepsilon) \cdot \mathrm{Pr}[\mathcal{M}(y) = z]
	\end{equation}
	where $\mathrm{Pr}[.] $ denotes probability\footnote{The definition holds both for densities $p$ and probability mass functions $P$ as $\mathrm{Pr}$.} and $\varepsilon \in \mathbb{R}^+$ is the privacy budget. A smaller $\varepsilon$ means stronger privacy protection, and vice versa \citep{Wang.et.al.2020.Sensors,Dwork.Roth.2013}.
\end{definition}

In words, to protect each individual's privacy, DP adds randomness when answering queries such that the query results are `similar' for any pair of neighboring datasets. For our example of the average document length, the true average length would be randomly `noisified'.

Another view on $(\varepsilon,0)$-DP is when we treat $\mathcal{M}(x)$ and $\mathcal{M}(y)$ as two probability distributions. Then $(\varepsilon,0)$-DP puts upper bound $\varepsilon$ on Max Divergence $\mathbb{D}_{\infty}(\mathcal{M}(x) || \mathcal{M}(y))$, that is the maximum `difference' of any output of two random variables.\footnote{
	$\mathbb{D}_{\infty}(\mathcal{M}(x) || \mathcal{M}(y)) = \max_{z \in \mathcal{Z}} \left\{ \ln \frac{ \mathrm{Pr}[\mathcal{M}(x) = z]  }{\mathrm{Pr}[\mathcal{M}(y) = z]} \right\}$}

Differential privacy has also a Bayesian interpretation, which compares the adversary's prior with the posterior after observing the values. The odds ratio is bounded by $\exp(\varepsilon)$, see \citep[p.~266]{Mironov.2017.CSF}.

\paragraph{Neighboring datasets and local DP}

The original definition of neigboring datasets (Def.~\ref{def:neighboring.datasets}) is usually adapted to a particular scenario; see \citep{Desfontaines.Pejo.2020} for a thorough overview. So far, we have shown the global DP scenario with a trusted curator holding a database of $|x|$ individuals. The size of the database can be arbitrary, even containing a single individual, that is $|x| = 1$. In this case, we say a dataset $y \in \mathcal{X}$ is neighboring if it contains another single individual ($y \in \mathcal{X}$, $|y| = 1$). This setup allows us to proceed without the trusted curator, as each individual queries its single record and returns differentially private output; this scenario is known as \emph{local} DP.

In local differential privacy, where there is no central database of records, \emph{any pair of data points (examples, input values, etc.) is considered neighboring} \citep{Wang.et.al.2020.Sensors}. This also holds for ADePT: using the DP terminology, any two utterances $x$, $y$ are neighboring datasets \citep{Krishna.et.al.2021.EACL}.

\begin{definition}
\label{def:sensitivity}
Let $x,y \in \mathcal{X}$ be neighboring datasets. The $\ell_1$-sensitivity of a function $f : \mathcal{X} \rightarrow \mathbb{R}^n$ is defined as
\begin{equation}
\label{eq:sensitivity.def}
\Delta f = \max_{x, y} \norm{f(x) - f(y)}_1
\end{equation}
where $\norm{.}_1$ is a $\ell_1$-norm defined as $\norm{\mathbf{x}}_1 = \sum_{i = 1}^{n} \lvert x_i \rvert$ \citep[p.~31]{Dwork.Roth.2013}.
\end{definition}

\begin{definition} Laplace density with scale $b$ centered at $\mu$ is defined as
\begin{equation}
\label{eq:laplace}
\mathrm{Lap}(t; \mu, b) = \frac{1}{2b} \exp(- \frac{\lvert \mu - t\rvert}{b})
\end{equation}
\end{definition}

\begin{definition}
\label{def:laplace.mechanism}
Laplace randomized algorithm \citep[p.~32]{Dwork.Roth.2013}. Given any function $f : \mathcal{X} \rightarrow \mathbb{R}^n$, the Laplace mechanism is defined as
\begin{equation}
\label{eq:laplace.mechanism}
\mathcal{M}_{L}(x, f, \varepsilon) = f(x) + (Y_1, \dots, Y_n)
\end{equation}
where $Y_i$ are i.i.d. random variables drawn from a Laplace distribution
\begin{equation}
\label{eq:laplace.mechnism.draw}
Y_i \sim \mathrm{Lap}\left(\mu = 0; b = \Delta f/\varepsilon \right)
\end{equation}
\end{definition}

An analogous definition centers the Laplace noise directly at the function's output, that is

\begin{equation}
\label{eq:laplace.draw2}
\begin{aligned}
\mathcal{M}_{L} = ( &Y_i \sim \mathrm{Lap}(\mu = f(x)_1; b = \Delta f/\varepsilon), \\
&\dots, \\
&Y_n \sim \mathrm{Lap}(\mu = f(x)_n; b = \Delta f/\varepsilon) )
\end{aligned}
\end{equation}

From Definition \ref{def:laplace.mechanism} also immediately follows that at point $z \in \mathbb{R}^n$, the density value of the Laplace mechanism $p(M_L(x, f, \varepsilon) = z)$ is

\begin{equation}
\label{eq:laplace.mechanism.pdf.full}
\prod_{i = 1}^{n} \frac{\varepsilon}{2 \Delta f } \exp \left( - \frac{ \varepsilon \abs{ f(x)_i - z_i}}{\Delta f } \right)
\end{equation}

\begin{theorem}
\label{theorem.laplace.dp}
The Laplace randomized algorithm preserves $(\varepsilon, 0)$-DP \citep{Dwork.Roth.2013}.
\end{theorem}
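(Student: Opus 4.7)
The plan is to verify the defining inequality of $(\varepsilon,0)$-DP (Def.~\ref{def:dp}) directly by bounding the ratio of the two output densities of $\mathcal{M}_L$ evaluated at an arbitrary point $z \in \mathbb{R}^n$. Concretely, I fix neighboring datasets $x,y \in \mathcal{X}$ and an output $z$, and aim to show that
\[
\frac{p(\mathcal{M}_L(x,f,\varepsilon) = z)}{p(\mathcal{M}_L(y,f,\varepsilon) = z)} \leq \exp(\varepsilon).
\]
Since the Laplace noise is drawn i.i.d.\ across coordinates, the joint density factorises, and the explicit formula \eqref{eq:laplace.mechanism.pdf.full} gives the ratio as a single product of exponentials whose normalising constants $\varepsilon/(2\Delta f)$ cancel.

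Next I would collect the argument of the resulting exponential into
\[
\frac{\varepsilon}{\Delta f} \sum_{i=1}^{n} \bigl(\abs{f(y)_i - z_i} - \abs{f(x)_i - z_i}\bigr).
\]
The key inequality is the reverse triangle inequality applied coordinate-wise, $\abs{f(y)_i - z_i} - \abs{f(x)_i - z_i} \leq \abs{f(x)_i - f(y)_i|$, which removes the dependence on $z$. Summing the bounds recombines the coordinates into an $\ell_1$-norm, giving the cleaner upper bound $(\varepsilon/\Delta f) \cdot \norm{f(x) - f(y)}_1$.

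The final step invokes Definition~\ref{def:sensitivity}: since $x,y$ are neighbouring, $\norm{f(x) - f(y)}_1 \leq \Delta f$ by the maximum in \eqref{eq:sensitivity.def}, so the exponent is at most $\varepsilon$ and the ratio is bounded by $\exp(\varepsilon)$. Because $z$ was arbitrary, the $(\varepsilon,0)$-DP condition of Def.~\ref{def:dp} holds. To cover the other direction of the inequality (swapping $x$ and $y$), the argument is fully symmetric.

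I expect no real obstacle here: the proof is essentially a triangle-inequality calculation, and the main subtlety worth flagging is only that the bound from sensitivity is an $\ell_1$-bound, which precisely matches the sum produced by the product of independent Laplace densities — this is why the Laplace mechanism pairs with $\ell_1$-sensitivity rather than, say, $\ell_2$. This same pairing is exactly what will be scrutinised in the subsequent analysis of ADePT's mechanism.
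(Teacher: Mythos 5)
Your proposal is correct and follows essentially the same route as the paper's proof in Appendix~\ref{app:laplace}: factorise the density ratio from Eq.~\eqref{eq:laplace.mechanism.pdf.full}, cancel the normalising constants, apply the reverse triangle inequality coordinate-wise to eliminate $z$, recombine the sum into the $\ell_1$-norm $\norm{f(y)-f(x)}_1$, bound it by $\Delta f$ via the sensitivity definition, and close with the symmetric case. Your closing remark about the pairing of the Laplace mechanism with $\ell_1$-sensitivity is exactly the point the paper exploits later, so nothing is missing.
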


As ADePT relies on the proof of the Laplace mechanism, we show the full proof in Appendix~\ref{app:laplace}.

\section{ADePT by \citet{Krishna.et.al.2021.EACL}}
\label{sec:adept}

Let $\mathbf{u}$ be an input text (a sequence of words or a vector, for example; this is not key to the main argument). $\mathsf{Enc}$ is an encoder function from input $\mathbf{u}$ to a latent representation vector $\mathbf{r} \in \mathbb{R}^n$ where $n$ is the number of dimensions of that latent space. $\mathsf{Dec}$ is a decoder from the latent representation back to the original input space (again, a sequence of words or a vector). What we have so far is a standard auto-encoder, such that

\begin{equation}
\label{eq:1}
\mathbf{r} = \mathsf{Enc}(\mathbf{u}) \quad \text{and} \quad \mathbf{v} = \mathsf{Dec}(\mathbf{r}).
\end{equation}

\citet{Krishna.et.al.2021.EACL} define ADePT as a randomized algorithm that, given an input $\mathbf{u}$, generates $\mathbf{v}$ as $\mathbf{v} = \mathsf{Dec}(\mathbf{r'})$, where $\mathbf{r'} \in \mathbb{R}^n$ is a clipped latent representation vector with added noise

\begin{equation}
\label{eq:krishna.2}
\mathbf{r'} = \mathbf{r} \cdot \min \left( 1, \frac{C}{\norm{\mathbf{r}}_2} \right) + \eta
\end{equation}

where $\eta \in \mathbb{R}^n$, $C \in \mathbb{R}$ is an arbitrary clipping constant, and $\norm{\mathbf{.}}_2$ is an $\ell_2$ (Euclidean) norm defined as $\norm{\mathbf{x}}_2 = \sqrt{\sum_{i = 1}^{n} x_i^2}$.

\begin{theorem}[which is false]
\label{theorem:krishna.dp}
\citep{Krishna.et.al.2021.EACL}	If $\eta$ is a multidimensional noise, such that each element $\eta_i$ is independently drawn from a distribution shown in equation \ref{eq:krishna.3}, then the transformation from $\mathbf{u} \to \mathbf{v}'$ is $(\varepsilon, 0)$-DP.
\end{theorem}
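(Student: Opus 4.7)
The plan is to follow the standard template for constructing a DP mechanism by reducing to the Laplace mechanism plus post-processing. I would first decompose the map $\mathbf{u} \mapsto \mathbf{v}$ into three stages: the (deterministic) encoder $\mathsf{Enc}$, the clipped-and-noised latent perturbation $\mathbf{r} \mapsto \mathbf{r}'$, and the (deterministic) decoder $\mathsf{Dec}$. Since $(\varepsilon,0)$-DP is preserved under deterministic post-processing, it suffices to show that the middle stage, viewed as a randomized mechanism from $\mathbf{u}$ to $\mathbf{r}'$, is $(\varepsilon,0)$-DP; the encoder can be folded into the numeric-valued function whose output is being noised.

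The core step is the sensitivity bound. Writing $g(\mathbf{u}) := \mathsf{Enc}(\mathbf{u}) \cdot \min\!\left(1, C/\norm{\mathsf{Enc}(\mathbf{u})}_2\right)$ for the noise-free clipped encoding, I would use the fact that clipping forces $\norm{g(\mathbf{u})}_2 \le C$ for every $\mathbf{u}$, regardless of what $\mathsf{Enc}$ does. Since local DP treats every pair of inputs as neighboring, the triangle inequality immediately yields $\norm{g(\mathbf{u}) - g(\mathbf{u}')}_2 \le 2C$ for all $\mathbf{u},\mathbf{u}'$. I would then take this as the sensitivity parameter, substitute it into the Laplace scale, and invoke Theorem~\ref{theorem.laplace.dp}, followed by post-processing through $\mathsf{Dec}$.

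The main obstacle — and the place I expect the argument to genuinely fail — is the mismatch between the norm controlled by clipping and the norm governing the Laplace mechanism. The definition of $\Delta f$ in Definition~\ref{def:sensitivity} is the $\ell_1$-sensitivity, and inspecting the Laplace density in equation~(\ref{eq:laplace.mechanism.pdf.full}) one sees that the ratio $\Pr[\mathcal{M}(\mathbf{u})=\mathbf{z}]/\Pr[\mathcal{M}(\mathbf{u}')=\mathbf{z}]$ reduces to $\exp\!\left(\frac{\varepsilon}{\Delta f}\sum_i \abs{g(\mathbf{u})_i - g(\mathbf{u}')_i}\right)$, which requires $\sum_i \abs{g(\mathbf{u})_i - g(\mathbf{u}')_i} \le \Delta f$. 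Clipping only guarantees $\sqrt{\sum_i (g(\mathbf{u})_i - g(\mathbf{u}')_i)^2} \le 2C$, and the $\ell_1$ norm can be as large as $\sqrt{n}$ times the $\ell_2$ norm (tight along the diagonal). So if the noise scale in equation~(3) of ADePT is calibrated with $\Delta f = 2C$, the resulting mechanism achieves at best $(\sqrt{n}\,\varepsilon, 0)$-DP, not $(\varepsilon, 0)$-DP.

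To make this a rigorous proof rather than a wish, I would therefore reverse the direction and compute the true $\ell_1$-sensitivity of $g$ by exhibiting a worst-case pair $(\mathbf{u}, \mathbf{u}')$ whose clipped encodings sit at antipodal corners of the type $\pm(C/\sqrt{n})(1,\dots,1)$. If $g$ is expressive enough to realize such outputs (which is plausible for a trained neural encoder), one obtains a concrete $\mathbf{z}$ at which the DP inequality~(\ref{eq:dp-def}) is violated by a factor up to $\exp((\sqrt{n}-1)\varepsilon)$. Rather than completing the proof of Theorem~\ref{theorem:krishna.dp}, this construction would falsify it; the natural fixes (scaling the Laplace noise by an extra $\sqrt{n}$, or switching to Gaussian noise at the cost of a $\delta>0$ term) are what any rescue attempt would have to invoke.
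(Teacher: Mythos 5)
Your proposal is correct and takes essentially the same route as the paper: you recognize that the claimed theorem cannot be proved because clipping bounds the $\ell_2$ norm while the Laplace mechanism is calibrated to $\ell_1$-sensitivity, and you falsify it with the antipodal diagonal pair $\pm(C/\sqrt{n})(1,\dots,1)$, recovering the true sensitivity $2C\sqrt{n}$ and the resulting $(\sqrt{n}\,\varepsilon,0)$ guarantee — exactly the argument of Sections~\ref{sec:adept.is.not.dp} and~\ref{sec:sensitivity.of.adept} (the paper's explicit counterexample is your construction instantiated at $n=2$, and its ``proof'' of the statement itself is only a citation of the flawed original argument). Your insistence on exhibiting a concrete $\mathbf{z}$ where inequality~(\ref{eq:dp-def}) fails is, if anything, a slightly more careful phrasing of the same disproof.
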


\begin{equation}
\label{eq:krishna.3}
\mathrm{Lap}(\eta_i) \sim \frac{\varepsilon}{4C} \exp \left( - \frac{\varepsilon \lvert v_i \rvert }{2 C} \right)
\end{equation}

\begin{proof}
\citet{Krishna.et.al.2021.EACL} refers to the proof of Theorem 3.6 by \citet[p.~32]{Dwork.Roth.2013}, which is the proof of the Laplace mechanism.
\end{proof}

First, $v_i$ in Eq.~\ref{eq:krishna.3} is ambiguous as it `semantically' relates to $\mathbf{v}$ which is the decoded vector that comes first \emph{after} drawing a random value; moreover $\eta$ and $\mathbf{v}$ have different dimensions. Given that the authors employ Laplacian noise and base their proofs on Theorem~3.6 from \citet[p.~32]{Dwork.Roth.2013}, we believe that Eq.~\ref{eq:krishna.3} is the standard Laplace mechanism

\begin{equation}
\label{eq:eq-eta-lap}
\eta_i \sim \mathrm{Lap}\left(\mu = 0; b = \Delta f/\varepsilon \right),
\end{equation}

such that each value $\eta_i$ is drawn independently from a zero-centered Laplacian noise parametrized by scale $b$ (Definition \ref{def:laplace.mechanism}). Given the density from Eq.~\ref{eq:laplace}, we rewrite Eq.~\ref{eq:eq-eta-lap} as

\begin{equation}
\label{eq:eta-laplace-2}
\eta_i \sim \frac{\varepsilon}{2 \Delta f} \exp(- \frac{\varepsilon \lvert t \rvert}{\Delta f}),
\end{equation}

\citet{Krishna.et.al.2021.EACL} set their clipped encoder output as the function $f$, that is\footnote{We contacted the authors several times to double check that this formula is correct without a potential typo but got no response. However other parts of the paper give evidence it is correct, e.g., the authors use an analogy to a hyper-sphere which is considered euclidean by default.}

\begin{equation}
\label{eq:f}
f = \mathbf{r} \cdot \min \left( 1, \frac{C}{\norm{\mathbf{r}}_2} \right).
\end{equation}

\begin{theorem}[which is false]
\label{theorem.krishna.sensitivity}
\citep{Krishna.et.al.2021.EACL} Let $f: \mathbb{R}^n \to \mathbb{R}^n$ be a function as defined in equation \ref{eq:f}. The $\ell_1$-sensitivity $\Delta f$ of this function is $2C$.
\end{theorem}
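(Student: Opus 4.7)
The plan is to recognize that since the statement is flagged as false, the natural (but unsound) path \citet{Krishna.et.al.2021.EACL} likely took reads roughly as follows: the clipping operation is built precisely so that $\norm{f(\mathbf{r})}_2 \leq C$, so the triangle inequality gives $\norm{f(x) - f(y)}_2 \leq 2C$, and conflating $\ell_2$ with $\ell_1$ (harmless only when $n = 1$) yields the claim $\Delta f \leq 2C$. First I would isolate this substitution as the locus of the error: the clipping controls the Euclidean ball of radius $C$, whereas sensitivity in Definition~\ref{def:sensitivity} is measured in $\ell_1$, and for $n > 1$ the two norms live on incomparable scales.

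Next I would produce an explicit counterexample to the bound $2C$ for $n \geq 2$. Take $\mathbf{r}_x, \mathbf{r}_y$ whose clipped images are $f(\mathbf{r}_x) = (C/\sqrt{n})(1,\dots,1)$ and $f(\mathbf{r}_y) = -(C/\sqrt{n})(1,\dots,1)$: both lie on the $\ell_2$-sphere of radius $C$, so the clipping constraint is honored, yet $\norm{f(\mathbf{r}_x) - f(\mathbf{r}_y)}_1 = 2C\sqrt{n}$, which already exceeds $2C$ whenever $n > 1$. A matching upper bound follows from norm equivalence $\norm{v}_1 \leq \sqrt{n}\,\norm{v}_2$ combined with the triangle inequality, giving $\norm{f(x) - f(y)}_1 \leq 2C\sqrt{n}$. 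Consequently the true $\ell_1$-sensitivity is $2C\sqrt{n}$, not $2C$, and the stated theorem simply cannot be proven as written.

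The hard part will not be the algebra, which is elementary, but articulating why the missing factor $\sqrt{n}$ is catastrophic in context: since the Laplace scale in Definition~\ref{def:laplace.mechanism} is $\Delta f / \varepsilon$, underestimating $\Delta f$ by a factor $\sqrt{n}$ means the injected noise is $\sqrt{n}$ times too small, so the DP argument invoking Theorem~\ref{theorem.laplace.dp} cannot go through at the claimed $\varepsilon$. I expect the subsequent sections of the paper to formalize exactly this quantitative gap and to trace its downstream effect on the fraction of utterances that end up effectively unprivatized.
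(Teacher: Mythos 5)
You correctly treated this statement as the paper does: it is the deliberately flagged-false claim of \citet{Krishna.et.al.2021.EACL}, and the paper's ``proof'' of it is nothing more than a quotation of their one-line argument that the maximal $\ell_1$ difference between two points of a (Euclidean) hypersphere of radius $C$ is $2C$. Your reconstruction of the error (clipping bounds the $\ell_2$ ball, sensitivity is measured in $\ell_1$, and the two coincide only for $n=1$) is exactly the paper's diagnosis, and your refutation reaches the same corrected value $2C\sqrt{n}$ as Theorem~\ref{theorem.krishna.sensitivity.disprove}, but by a genuinely different route. The paper proves the exact sensitivity in Appendix~\ref{app:proof.of.sensitivity} by maximizing $\sum_i \abs{x_i}$ subject to $\norm{x}_2 = C$ with Lagrange multipliers, deriving the extremal points $x_i = \pm C/\sqrt{n}$ (the hypercube-in-hypersphere picture), and separately gives a concrete $n=2$ violation, $\mathbf{r}_x = -\mathbf{r}_y = (\frac{2}{3}C, \frac{2}{3}C)$ with $\ell_1$ distance $\frac{8C}{3} > 2C$, in Section~\ref{sec:adept.is.not.dp}. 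You instead exhibit the antipodal diagonal points directly as a lower bound and close the gap with $\norm{v}_1 \leq \sqrt{n}\,\norm{v}_2$ plus the triangle inequality for the upper bound; this is more elementary and, as an upper-bound certificate, arguably cleaner, since the Lagrange computation as written only locates a critical point and relies on symmetry to argue maximality, whereas your norm-equivalence bound guarantees $2C\sqrt{n}$ outright. What the paper's derivation buys is that the extremal configuration is derived rather than guessed, which is pedagogically useful. Your closing observation that the Laplace scale $\Delta f/\varepsilon$ is then underestimated by a factor $\sqrt{n}$ is precisely the quantitative point the paper makes (the factor $\approx 6$ to $32$ for dimensions $32$ to $1024$, and the simulation in Section~\ref{sec:non-protected}).
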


\begin{proof}
\citep{Krishna.et.al.2021.EACL} Maximum $\ell_1$ norm difference between two points in a hyper-sphere of radius $C$ is $2C$.
\end{proof}

Thus by plugging the sensitivity $\Delta f$ from Theorem \ref{theorem.krishna.sensitivity} into Eq.~\ref{eq:eta-laplace-2}, we obtain
\begin{equation}
\label{eq:krishna-lap-final}
\eta_i \sim \frac{\varepsilon}{4C} \exp(- \frac{\varepsilon \lvert t \rvert}{2C}),
\end{equation}
which is what \citet{Krishna.et.al.2021.EACL} express in Eq.~\ref{eq:krishna.3}.
To sum up, the essential claim of \citet{Krishna.et.al.2021.EACL} is that if each $\eta_i$ is drawn from Laplacian distribution with scale $\frac{2C}{\varepsilon}$, their mechanism is $(\varepsilon, 0)$ differentially private.

\section{ADePT with Laplace mechanism is not differentially private}
\label{sec:adept.is.not.dp}

\begin{proof} Following the proof of Theorem~\ref{theorem.laplace.dp}, the following bound (Eq.~\ref{eq:laplace.proof.key-part}) must hold for any $x, y$ 

\begin{small}
\begin{equation*}
\frac{p(M_L(x, f, \varepsilon) = z)}{p(M_L(y, f, \varepsilon) = z)} \leq \exp \left( \frac{ \varepsilon}{\Delta f} \cdot \norm{f(y) - f(x)}_1 \right)
\end{equation*}
\end{small}
and thus this inequality must hold too
\begin{equation}
\label{eq:key.inequality.disproof}
\exp \left( \frac{ \varepsilon}{\Delta f} \cdot \norm{f(y) - f(x)}_1 \right) \leq \exp(\varepsilon)
\end{equation}

Fix the clipping constant $C > 0$ arbitrarily ($C \in \mathbb{R}$), set dimensions to $n = 2$. Let $\mathbf{r}_y = (\frac{2}{3}C, \frac{2}{3}C)$ be the input $y$ of the clipping function $f$ from Eq.~\ref{eq:f}.

\begin{align}
f(y) &= \mathbf{r}_y \cdot \min \left( 1, \frac{C}{\norm{\mathbf{r}_y}_2} \right) \nonumber & \text{(from Eq.~\ref{eq:f})} \\
&= \mathbf{r}_y \cdot \min \left( 1, \frac{C}{\frac{2\sqrt{2}}{3} C} \right) &\\
&= \mathbf{r}_y \cdot \min \left( 1, 1.06066... \right) &\\
&= \mathbf{r}_y \cdot 1 = \left( \frac{2C}{3}, \frac{2C}{3} \right) &
\end{align}

Similarly, let  $\mathbf{r}_x = (-\frac{2}{3}C, -\frac{2}{3}C)$ be input $x$, for which we get analogically $f(x) = (-\frac{2}{3}C, -\frac{2}{3}C)$. Then

\begin{align}
\norm{f(y) - f(x)}_1 = \\
\norm{\left(\frac{2C}{3}, \frac{2C}{3}\right) - \left(-\frac{2C}{3}, -\frac{2C}{3}\right)}_1  = \\
 =\frac{8C}{3} \label{eq:8c}
\end{align}

Plug Theorem~\ref{theorem.krishna.sensitivity} and Eq.~\ref{eq:8c} into Eq.~\ref{eq:key.inequality.disproof}

\begin{align}
\exp \left( \frac{ \varepsilon}{2C} \cdot \norm{f(y) - f(x)}_1 \right) &\leq \exp(\varepsilon) \\
\exp \left( \frac{ \varepsilon}{2C} \cdot \frac{8C}{3} \right) &\leq \exp(\varepsilon) \\
\exp \left( \frac{4}{3} \cdot \varepsilon \right) & \nleq \exp(\varepsilon)
\end{align}
therefore Theorem~\ref{theorem:krishna.dp} by \citet{Krishna.et.al.2021.EACL} must be false.
\end{proof}

In general, it is the inequality $\norm{\mathbf{x}}_1 \geq \norm{\mathbf{x}}_2$ that makes ADePT fail the DP proof.

\section{Actual sensitivity of ADePT}
\label{sec:sensitivity.of.adept}

\begin{theorem}
\label{theorem.krishna.sensitivity.disprove}
Let $f: \mathbb{R}^n \to \mathbb{R}^n$ be a function as defined in Eq.~\ref{eq:f}. The sensitivity $\Delta f$ of this function is $2C\sqrt{n}$.
\end{theorem}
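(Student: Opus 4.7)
The plan is to prove the claim by establishing $2C\sqrt{n}$ as both an upper bound and an attainable value of $\|f(x) - f(y)\|_1$ over all pairs of inputs $x, y \in \mathbb{R}^n$. The key structural observation is that the function $f$ in Eq.~\ref{eq:f} is precisely the Euclidean projection onto the closed $\ell_2$-ball of radius $C$, so that $\|f(z)\|_2 \leq C$ for every $z \in \mathbb{R}^n$. Everything else will follow from this fact together with the standard norm inequality $\|\mathbf{v}\|_1 \leq \sqrt{n}\,\|\mathbf{v}\|_2$, which is sharp exactly when all coordinates of $\mathbf{v}$ have equal magnitude.

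For the upper bound, I would first apply the triangle inequality in the $\ell_1$-norm to write $\|f(x) - f(y)\|_1 \leq \|f(x)\|_1 + \|f(y)\|_1$, then use $\|\mathbf{v}\|_1 \leq \sqrt{n}\,\|\mathbf{v}\|_2$ coordinatewise to bound each term, and finally invoke $\|f(\cdot)\|_2 \leq C$ to conclude that $\|f(x) - f(y)\|_1 \leq 2C\sqrt{n}$. This already shows $\Delta f \leq 2C\sqrt{n}$.

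For the matching lower bound, I would exhibit a concrete pair of inputs saturating the inequality. A natural choice is $\mathbf{r}_x = (C, C, \ldots, C)$ and $\mathbf{r}_y = (-C, -C, \ldots, -C)$. Both have $\ell_2$-norm $C\sqrt{n} \geq C$, so clipping is active with factor $C/(C\sqrt{n}) = 1/\sqrt{n}$. Substituting into Eq.~\ref{eq:f} gives $f(x) = (C/\sqrt{n}, \ldots, C/\sqrt{n})$ and $f(y) = (-C/\sqrt{n}, \ldots, -C/\sqrt{n})$, and a direct computation of the $\ell_1$-norm of their difference yields $n \cdot (2C/\sqrt{n}) = 2C\sqrt{n}$. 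Combined with the upper bound, this proves $\Delta f = 2C\sqrt{n}$.

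The main obstacle is purely conceptual rather than technical: one must resist the temptation used in Theorem~\ref{theorem.krishna.sensitivity} to bound $\ell_1$-diameter by $\ell_2$-diameter. The equal-coordinate configuration is precisely the worst case where the ratio $\|\mathbf{v}\|_1 / \|\mathbf{v}\|_2$ attains its maximum $\sqrt{n}$, so the extra factor $\sqrt{n}$ over the claimed sensitivity $2C$ is not an artifact of slack in the analysis but is genuinely achieved. No calculation beyond the norm inequality and direct substitution into the clipping formula is needed.
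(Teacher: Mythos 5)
Your proof is correct, but it takes a different route from the paper's. The paper reduces the problem to maximizing the $\ell_1$-norm of a single vector on the $\ell_2$-sphere of radius $C$ and solves that constrained optimization with Lagrange multipliers, obtaining the equal-coordinate point $(C/\sqrt{n},\dots,C/\sqrt{n})$ and then taking the antipodal pair to realize the distance $2C\sqrt{n}$. You instead get the upper bound in two lines from the triangle inequality together with the norm equivalence $\norm{\mathbf{v}}_1 \leq \sqrt{n}\,\norm{\mathbf{v}}_2$ applied to $\norm{f(x)}_2, \norm{f(y)}_2 \leq C$, and then exhibit explicit inputs $\mathbf{r}_x = (C,\dots,C)$, $\mathbf{r}_y = (-C,\dots,-C)$ whose clipped images saturate the bound. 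The two arguments identify the same extremal configuration, but yours is arguably tighter as a proof of the stated value: the Lagrange computation only locates a critical point of the single-vector problem and leaves the passage from ``maximal $\ell_1$-norm on the sphere'' to ``maximal pairwise $\ell_1$-distance over the ball'' implicit, whereas your decomposition $\norm{f(x)-f(y)}_1 \leq \norm{f(x)}_1 + \norm{f(y)}_1 \leq 2C\sqrt{n}$ closes that gap explicitly and avoids calculus altogether. What the paper's approach buys in exchange is the geometric picture (the maximizer as the vertex of a hypercube inscribed in the hypersphere), which motivates where the factor $\sqrt{n}$ comes from; your remark that the equal-coordinate direction is exactly where $\norm{\mathbf{v}}_1/\norm{\mathbf{v}}_2$ attains $\sqrt{n}$ conveys the same insight more economically.
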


\begin{proof}
See Appendix \ref{app:proof.of.sensitivity}.
\end{proof}

\begin{corollary}
Since $2C\sqrt{n} = 2C$ only for $n = 1$, ADePT could be differentially private only if the encoder's latent representation $\mathbf{r} = \mathsf{Enc}(\mathbf{u})$ were a single scalar.
\end{corollary}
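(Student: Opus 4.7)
The plan is to pin down what \emph{could be differentially private} means operationally and then compare ADePT's calibrated noise scale against the true sensitivity established in Theorem~\ref{theorem.krishna.sensitivity.disprove}. ADePT instantiates the Laplace mechanism with scale $b = 2C/\varepsilon$, obtained by plugging the (incorrect) sensitivity $2C$ into Eq.~\ref{eq:laplace.mechnism.draw}. The proof of Theorem~\ref{theorem.laplace.dp} yields $(\varepsilon,0)$-DP precisely when $b \geq \Delta f/\varepsilon$, so I would first read off the necessary condition $2C/\varepsilon \geq 2C\sqrt{n}/\varepsilon$, which simplifies to $\sqrt{n} \leq 1$; because $n$ is a positive integer, this forces $n = 1$.

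To upgrade this from a mere limitation of the upper-bound argument to an actual failure of the mechanism for $n \geq 2$, I would exhibit a neighboring pair that saturates the true sensitivity in every dimension, mirroring the construction from Sec.~\ref{sec:adept.is.not.dp}. Taking $\mathbf{r}_y = (C/\sqrt{n})(1,\dots,1)$ and $\mathbf{r}_x = -\mathbf{r}_y$ places both points on the clipping sphere, so $f$ acts as the identity and $\norm{f(y)-f(x)}_1 = 2C\sqrt{n}$. Evaluating the density in Eq.~\ref{eq:laplace.mechanism.pdf.full} with $b = 2C/\varepsilon$ at $z = f(x)$ produces a likelihood ratio of $\exp(\varepsilon\sqrt{n})$, which strictly exceeds $\exp(\varepsilon)$ whenever $n \geq 2$ and thereby violates Def.~\ref{def:dp}.

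For $n = 1$ the statement is vacuously consistent: $2C\sqrt{1} = 2C$, so ADePT's scale matches exactly what Theorem~\ref{theorem.laplace.dp} requires and the mechanism reduces to a textbook Laplace mechanism on a scalar. The main obstacle is not the algebra but the witness step: verifying that the antipodal points on the clipping sphere actually attain the $\ell_1$-distance $2C\sqrt{n}$ in arbitrary dimension. This follows because each of the $n$ coordinates contributes $2C/\sqrt{n}$ in absolute value, summing to exactly $2C\sqrt{n}$, and it is the only non-routine verification the corollary demands.
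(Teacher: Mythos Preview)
Your argument is correct. The paper does not give a separate proof of this corollary; it is stated as an immediate consequence of Theorem~\ref{theorem.krishna.sensitivity.disprove} together with the failure demonstrated in Sec.~\ref{sec:adept.is.not.dp}. Your write-up is more explicit in two ways: (i) you make the ``only if'' direction rigorous by exhibiting, for every $n\geq 2$, the antipodal witnesses $\pm(C/\sqrt{n})(1,\dots,1)$ that force the likelihood ratio to $\exp(\varepsilon\sqrt{n})$, whereas the paper's concrete counterexample in Sec.~\ref{sec:adept.is.not.dp} is only for $n=2$; and (ii) you close the $n=1$ case by noting that the calibrated scale then coincides with the correct one. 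One small wording issue: the proof of Theorem~\ref{theorem.laplace.dp} gives a \emph{sufficient} condition $b\geq \Delta f/\varepsilon$, not ``precisely when''; you already compensate for this by supplying the explicit violation, so the logic is intact, but the sentence should be rephrased accordingly.
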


Since \citet{Krishna.et.al.2021.EACL} do not specify the dimensionality of their encoder's output, we can only assume some typical values in a range from 32 to 1024, so that the true sensitivity of ADePT is $\approx 6$ to $32$ times higher than reported.

\section{Magnitude of non-protected data}
\label{sec:non-protected}

How many data points actually violate the privacy guarantees? Without having access to the trained model and its hyper-parameters ($C$, in particular), it is hard to reason about properties of the latent space, where privatization occurs. We thus simulated the encoder's `unclipped' vector outputs $\mathbf{r}$ by sampling 10k vectors from two distributions: 1) uniform within $(-C, +C)$ for each dimension, and 2) zero-centered normal with $\sigma^2 = 0.1 \cdot C$. Especially the latter one is rather optimistic as it samples most vectors close to zero. In reality these latent space vectors are unbounded.

Each pair of such vectors in the latent space after clipping but before applying DP (Eq.~\ref{eq:f}) is `neighboring datasets' so their $\ell_1$ distance must be bound by sensitivity ($2C$ as claimed in Theorem~\ref{theorem.krishna.sensitivity}) in order to satisfy DP with the Laplace mechanism.

We ran the simulation for an increasing dimensionality of the encoder's output and measured how many pairs violate the sensitivity bound.\footnote{Code available at \newline \url{https://github.com/habernal/emnlp2021-differential-privacy-nlp}} Fig.~\ref{fig:violation} shows the `curse of dimensionality' for norms. Even for a considerably small encoder's vector size of 32 and unbounded encoder's latent space, almost \textbf{none} of the data points would be protected by ADePT's Laplace mechanism.

\begin{figure}[h!]
\includegraphics[width=0.95\linewidth]{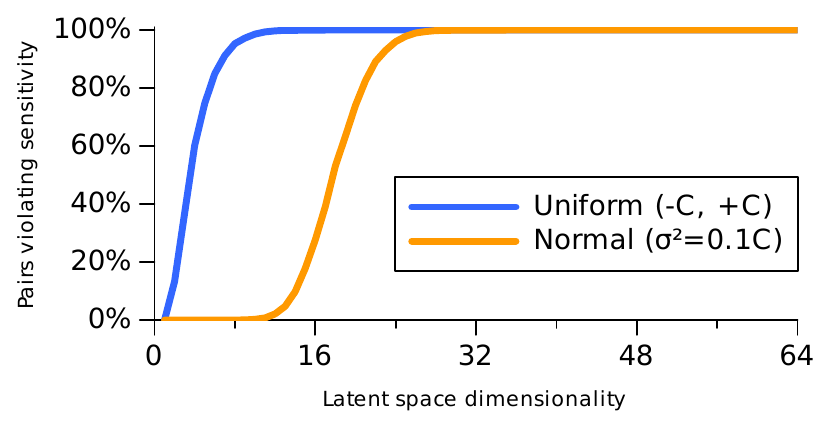}
\caption{\label{fig:violation} Simulation results. Percentage of `neighboring datasets' that violate the distance bounds required by the Laplace mechanism with sensitivity $2C$.}
\end{figure}

\section{Discussion}
\label{sec:discussion}

Local DP differs from centralized DP in such a way that there is no central database and once the privatized data item `leaves' an individual, it stays so forever. This makes typical membership inference attacks unsuitable, as no matter what happens to the rest of the world, the probability of inferring the individual's true value after observing their privatized data item is bounded by $\exp(\varepsilon)$.

For example, the ATIS dataset used in ADePT contains 5,473 utterances of lengths 1 to 46 tokens, with a quite limited vocabulary of 941 words. In theory, the search space of all possible utterances would be of size $941^{46} \approx 6\times 10^{136}$, and under $\varepsilon$-DP all of them are multiplicatively indistinguishable -- for example, after observing \emph{``on april first i need a ticket from tacoma to san jose departing before 7 am''} from ADePT's autoencoder privatized output, the true input might well have been \emph{``on april first i need a flight going from phoenix to san diego''} or \emph{``monday morning i would like to fly from columbus to indianapolis''} and our posterior certainty of any of those is limited by the privacy bound. However, since outputs of ADePT are leaking privacy, attacks are possible. We sketch a potential scenario in Appendix~\ref{sec:potential.attacks}.

There are two possible remedies for ADePT. Either the latent vector clipping in Eq.~\ref{eq:krishna.2} could use $\ell_1$-norm, or the Laplacian noise in Eq.~\ref{eq:krishna.3} could use the correct sensitivity as determined in Theorem~\ref{theorem.krishna.sensitivity.disprove}. In either case, the utility in the downstream tasks as presented by \citet{Krishna.et.al.2021.EACL} are expected to be worse due to a much larger amount of required noise.

\section{Conclusion}

This paper revealed a potential trap for NLP researchers when adopting a local DP approach.
We believe it contributes to a better understanding of the exact modeling choices involved in determining the sensitivity of local DP algorithms. We hope that DP will become a widely accessible and well-understood framework within the NLP community. 

\section*{Acknowledgements}

The independent research group TrustHLT is supported by the Hessian Ministry of Higher Education, Research, Science and the Arts.
Thanks to Max Glockner, Timour Igamberdiev, Jorge Cordona, Jan-Christoph Klie, and the anonymous reviewers for their helpful feedback.

\bibliography{references}
\bibliographystyle{acl_natbib}


\appendix

\section{Proof of Laplace mechanism}
\label{app:laplace}

\begin{theorem}
	\label{theorem:triangle.ineq}
	Negative triangle inequality for absolute values. For $a, x, y \in \mathbb{R}$,
	\begin{equation}
	\label{eq:negative.triangle.inequality}
	|x - a| - |y - a| \leq |x - y|.
	\end{equation}	
\end{theorem}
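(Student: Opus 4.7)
The plan is to reduce the claimed inequality to a single application of the ordinary triangle inequality for absolute values, $|u + v| \leq |u| + |v|$, which I will treat as a standard fact available without further justification.

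First I would rewrite $x - a$ as a telescoping sum by inserting $\pm y$, namely
\begin{equation*}
x - a = (x - y) + (y - a).
\end{equation*}
Applying the ordinary triangle inequality to the right-hand side gives
\begin{equation*}
|x - a| = |(x - y) + (y - a)| \leq |x - y| + |y - a|.
\end{equation*}
Subtracting $|y - a|$ from both sides yields exactly
\begin{equation*}
|x - a| - |y - a| \leq |x - y|,
\end{equation*}
which is the desired conclusion.

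There is essentially no obstacle here; the only subtle point is the algebraic trick of inserting $\pm y$ inside $x - a$, which is the standard way one derives the so-called reverse (or negative) triangle inequality from the ordinary one. I would keep the write-up to these three lines, since the statement is symmetric in $x$ and $y$ up to sign (so the corresponding lower bound $-|x-y| \leq |x-a| - |y-a|$ is not needed for the one-sided form used in the proof of the Laplace mechanism that this lemma supports).
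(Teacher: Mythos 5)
Your proof is correct and matches the paper's intent: the paper simply states that the result is ``directly based on the triangle inequality,'' and your decomposition $x - a = (x-y) + (y-a)$ followed by $|x-a| \leq |x-y| + |y-a|$ and subtraction is exactly that standard one-line derivation.
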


Proof is directly based on the triangle inequality.

\begin{corollary}
	\label{corollary:sensitivity}
	Definition~\ref{def:sensitivity} implies that $\Delta f$ is an upper bound value on the $\ell_1$ norm of the function output for any neighboring $x$ and $y$. In other words
	\begin{equation}
	\label{eq:sensitivity.upperbound}
	\norm{f(x) - f(y)}_1 \leq \Delta f
	\end{equation}
\end{corollary}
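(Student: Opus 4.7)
The plan is to derive the inequality directly from Definition~\ref{def:sensitivity}, which sets $\Delta f = \max_{x,y} \norm{f(x) - f(y)}_1$ over all neighboring pairs. Since the maximum of a set of nonnegative reals is by construction an upper bound for every element of that set, the corollary is essentially a notational unpacking of the definition rather than a substantive claim.

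Concretely, I would proceed in two short steps. First, fix an arbitrary pair of neighboring datasets $(x,y)$ in the domain of $f$, and observe that this pair lies in the range of values over which the maximum in Eq.~\ref{eq:sensitivity.def} is taken. Second, invoke the elementary fact that any element of a set is bounded above by the maximum (or supremum) of that set, which yields
\begin{equation*}
\norm{f(x) - f(y)}_1 \leq \max_{x',y'} \norm{f(x') - f(y')}_1 = \Delta f,
\end{equation*}
which is precisely Eq.~\ref{eq:sensitivity.upperbound}.

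No auxiliary machinery is required---neither Theorem~\ref{theorem:triangle.ineq} (introduced just above for use in the Laplace mechanism proof proper) nor any specific properties of the $\ell_1$ norm enter the argument. The only subtlety, and the closest thing to an obstacle, is ensuring that the universal quantification in the corollary (\emph{for any neighboring $x$ and $y$}) matches precisely the variables being maximized over in Definition~\ref{def:sensitivity}; once that correspondence is made explicit, the bound follows in a single line. One may additionally remark that the inequality is tight, with equality attained whenever $(x,y)$ realizes the maximum in Eq.~\ref{eq:sensitivity.def}.
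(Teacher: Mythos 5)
Your proposal is correct and takes essentially the same route as the paper, which states Corollary~\ref{corollary:sensitivity} as an immediate consequence of Definition~\ref{def:sensitivity}: the maximum over all neighboring pairs in Eq.~\ref{eq:sensitivity.def} bounds the value attained by any particular neighboring pair. No further argument is needed, and your observation that Theorem~\ref{theorem:triangle.ineq} plays no role here is also consistent with the paper.
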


\begin{proof}[The actual proof \citep{Dwork.Roth.2013}] We will prove that for any $x, y$ the following ratio
	\begin{equation}
	\label{eq:proof.laplace.1}
	\frac{p(M_L(x, f, \varepsilon) = z)}{p(M_L(y, f, \varepsilon) = z)}
	\end{equation}
	is bounded by $\exp(\varepsilon)$ and thus satisfies Definition \ref{def:dp}. Fix $z \in \mathbb{R}^n$ arbitrarily. By plugging Eq.~\ref{eq:laplace.mechanism.pdf.full} into Eq.~\ref{eq:proof.laplace.1}, we get
	
	\begin{align}
	&= \prod_{i=1}^{n} \frac{ \frac{\varepsilon}{2\Delta f} \exp \left( - \frac{ \varepsilon \abs{f(x)_i - z_i} }{\Delta f } \right)}{ \frac{\varepsilon}{2\Delta f} \exp \left( - \frac{ \varepsilon \abs{f(y)_i - z_i} }{\Delta f } \right)}
	\\
	&= \prod_{i=1}^{n} \frac{  \exp \left(- \frac{\varepsilon}{\Delta f} \abs{ f(x)_i - z_i } \right)}{  \exp \left( - \frac{\varepsilon}{\Delta f} \abs{ f(y)_i - z_i } \right)}
	\\
	&= \prod_{i=1}^{n}  \exp \left( \frac{ \varepsilon}{\Delta f} \cdot
	\underbrace{
		\abs{ f(y)_i - z_i } - \abs{ f(x)_i - z_i }
	}_\text{Apply Theorem \ref{theorem:triangle.ineq}}
    \right)
	\\
	&\leq \prod_{i=1}^{n}  \exp \left( \frac{ \varepsilon}{\Delta f} \cdot \abs{ f(y)_i - f(x)_i } \right) 
	\\
	&= \exp \left( \frac{ \varepsilon}{\Delta f}
	\cdot \underbrace{\sum_{i=1}^{n} \abs{ f(y)_i - f(x)_i }}_\text{Def.~of $\ell_1$ norm} 
	\right)
	\\
	&= \exp \left( \frac{ \varepsilon}{\Delta f} \cdot \underbrace{\norm{f(y) - f(x)}_1}_{\leq \Delta f \quad \text{Corollary \ref{corollary:sensitivity}}} \right) \label{eq:laplace.proof.key-part}
	\\
	&\leq \exp \left( \frac{ \varepsilon}{\Delta f} \cdot  \Delta f \right)
	\\
	&= \exp(\varepsilon)
	\end{align}
	which is what we wanted. By symmetry we get the proof for
	$\frac{p(M_L(y, f, \varepsilon) = z)}{p(M_L(x, f, \varepsilon) = z)} \leq \exp(\varepsilon)$.
	
\end{proof}

\section{Proof of Theorem \ref{theorem.krishna.sensitivity.disprove}}
\label{app:proof.of.sensitivity}

\begin{proof}
	The definition of sensitivity corresponds to the maximum $\ell_1$ distance of any two vectors $\mathbb{R}^n$ from the range of $f$. As Eq.~\ref{eq:f} bounds all vectors to their $\ell_2$ (Euclidean) norm, we want to find the distance between two opposing points on an $n$-dimensional sphere that have maximal $\ell_1$ distance.
	
	Let $n \in \mathbb{N} > 0$ be the number of dimension and $C \in \mathbb{R}$ a positive constant. We solve the following optimization problem

	\begin{maxi*}
		{x_1,\dots,x_n}{f(x_1, \dots, x_n) = \abs{x_1} + \dots + \abs{x_n}}{}{}
		\addConstraint{\sqrt{x_1^2 + \dots + x_n^2}}{=C}{}
	\end{maxi*}
	
	First, we can get rid of the absolute values in $f(x_1, \dots, x_n)$ as the maximums will be symmetric, i.e. $\max (\abs{a} + \abs{b}) = \max (\abs{-a} + \abs{-b})$. 
	
	Using Lagrange multipliers, we define the constraints as
	
	$$
	g(x_1, \dots, x_n) = \sqrt{x_1^2 + \dots + x_n^2} - C = 0,
	$$
	hence
	$$
	\begin{aligned}
	\mathcal{L}(x_1, \dots, x_n, \lambda) &= f(x_1, \dots, x_n) \\ & \quad + \lambda \cdot g(x_1, \dots, x_n) \\
	&= x_1 + \dots + x_n + \\ & \quad \lambda \sqrt{x_1^2 + \dots + x_n^2} - \lambda C
	\end{aligned}
	$$
	
	The gradient $\nabla_{x_1, \dots, x_n,\lambda} \mathcal{L}(x_1, \dots, x_n, \lambda)$ is
	
	$$
	\begin{aligned}
	\left( \pdv{\mathcal{L}}{x_1} , \dots, \pdv{\mathcal{L}}{x_n}, \pdv{\mathcal{L}}{\lambda} \right) = \\
	\left( \frac{x_1 \lambda}{\sqrt{x_1^2 + \dots + x_n^2}} + 1, \dots, \right. \\
	\left. \frac{x_n \lambda}{\sqrt{x_1^2 + \dots + x_n^2}} + 1, \sqrt{x_1^2 + \dots + x_n^2} - C \right)
	\end{aligned}
	$$
	
	Solve $\nabla_{x_1, \dots, x_n,\lambda} \mathcal{L}(x_1, \dots, x_n, \lambda) = 0$ by the following system of $n + 1$ equations

	$$
	\begin{aligned}
	\frac{x_1 \lambda}{\sqrt{x_1^2 + \dots + x_n^2}} + 1 &= 0 \\
	\dots &= 0 \\
	\frac{x_n \lambda}{\sqrt{x_1^2 + \dots + x_n^2}} + 1 &= 0 \\
	\sqrt{x_1^2 + \dots + x_n^2} - C &= 0
	\end{aligned}
	$$
	
	From the first $n$ expressions we get
	
	$$
	\begin{aligned}
	\lambda &= - \frac{\sqrt{x_1^2 + \dots + x_n^2}}{x_1} = \dots = \\
	&= - \frac{\sqrt{x_1^2 + \dots + x_n^2}}{x_n}, 
	\end{aligned}
	$$
	hence $x_1 = x_2 = \dots = x_n$. Plugging into the last term we obtain
	
	\begin{equation}
	\label{eq:maximum-norm-result}
	x_1 = x_2 = \dots = x_n = \frac{C}{\sqrt{n}}
	\end{equation}
	
	Geometrically, $x_i$ corresponds to the size of an edge of a hypercube embedded into a hypersphere of radius $C$.
	
	Now let $\mathbf{x}, \mathbf{x'} \in \mathbb{R}^n$ such that they have maximum $\ell_1$ norm (Eq.~\ref{eq:maximum-norm-result}) and their $\ell_2$ norm is $C$ (that is the output of function $f$ after clipping in Eq.~\ref{eq:f})
	
	$$
	\begin{aligned}
	\mathbf{x} &= \left(- \frac{C}{\sqrt{n}}, \dots, - \frac{C}{\sqrt{n}} \right), \\
	\mathbf{x'} &= \left( \frac{C}{\sqrt{n}}, \dots,  \frac{C}{\sqrt{n}} \right)
	\end{aligned}
	$$
	
	Then their $\ell_1$ distance is
	
	\begin{equation}
	\begin{aligned}
	\norm{\mathbf{x}- \mathbf{x'}}_1 &= \sum_{i = 1}^{n} \abs{- \frac{C}{\sqrt{n}} -\frac{C}{\sqrt{n}}} \\
	&= n \cdot \left( \frac{2C}{\sqrt{n}} \right) = 2C\sqrt{n}
	\end{aligned}
	\end{equation}
	
\end{proof}

\section{Potential attacks}
\label{sec:potential.attacks}

Here we only sketch a potential attack on a single individual's privatized output $\mathbf{v}$. We do not speculate on the actual feasibility as differentiall privacy operates with the worst case scenario, that is the theoretical possibility that the adversary has unlimited compute power and unlimited background knowledge. However, real life examples show that anything less protective than DP can be attacked and it is mostly a matter of resources.\footnote{Diffix, a EU-based company, claimed their system is a better alternative to DP but did not provide formal guarantees for such claims. A paper from \citet{Gadotti.et.al.2019.USENIX} was a bitter lesson for Diffix, as it shows a successful attack. The bottom line is that without formal guarantees, it is impossible to prevent any future attacks.}

We expect to have access to the trained ADePT autoencoder as well as the ATIS corpus (without the single individual whose value we try to infer, to be fair). We would need to find the privatized latent vector of $\mathbf{v}$, that is $\mathbf{r'}$, which could be possible by exploiting and probing the model. Second, by employing a brute-force attack, we can train a LM on ATIS to generate a feasible search space of input utterances, project them to the latent space, and explore the neighborhood of $\mathbf{r'}$. This would drastically reduce the search space. Then, depending on the geometric properties of that latent space, it might be the case that `similar' utterances are closer to each other, increasing the probability of finding a similar utterance which might be a `just good enough' approximation for the adversary.

\end{document}